\newtheorem{theorem}{Theorem}[section]
\newtheorem{proposition}[theorem]{Proposition}
\newcommand{\rr}{\mathbf{r}}
\newcommand{\cc}{\mathbf{c}}
\newcommand{\R}{\mathbb{R}}
\newcommand{\Rr}{\mathcal{R}}
\newcommand{\E}{\mathcal{E}}
\newcommand{\T}{\mathcal{T}}
\newcommand{\N}{\mathcal{N}}
\newcommand{\CP}{\mathcal{CP}}
\newcommand{\figref}[1]{Fig.~\ref{#1}}
\newcommand{\norm}[1]{\left\lVert#1\right\rVert}
\definecolor{forest}{rgb}{0.0, 0.5, 0.0}
\title{\LARGE \bf Direct Bézier-Based Trajectory Planner for \\ Improved Local Exploration of Unknown Environments}
\author{Lorenzo Gentilini$^*$, Dario Mengoli$^*$, and Lorenzo Marconi$^*$
  \thanks{$^*$L. Gentilini, D. Mengoli and L. Marconi are with the Center for Research on Complex Automated Systems (CASY), Department of Electrical, 
  Electronic and Information Engineering (DEI), University of Bologna, Bologna, Italy (e-mails:
  \texttt{\{lorenzo.gentilini6, dario.mengoli2, lorenzo.marconi\}@unibo.it}).}}
\begin{document}
\maketitle
\thispagestyle{empty}
\pagestyle{empty}

\begin{abstract}
Autonomous exploration is an essential capability for mobile robots, as the majority of their applications require the ability to efficiently collect information about their surroundings. 
In the literature, there are several approaches, ranging from frontier-based methods to hybrid solutions involving
the ability to plan both local and global exploring paths, but only few of them focus on improving local exploration
by properly tuning the planned trajectory, often leading to ``stop-and-go'' like behaviors.
In this work we propose a novel RRT-inspired Bézier-based next-best-view trajectory planner able to deal with the problem of \textit{fast local exploration}. 
Gaussian process inference is used to guarantee fast exploration gain retrieval while still being consistent with the exploration task.
The proposed approach is compared with other available state-of-the-art algorithms and tested in a real-world scenario. 
The implemented code is publicly released as open-source code to encourage further developments and benchmarking.
\end{abstract}

\section{Introduction}\label{sec:INTRODUCTION}
The ability to autonomously plan and execute informative trajectories in previously unknown environments is a fundamental
requirement for mobile robots. As a matter of fact, they started to be employed in a huge number of different applications
which require the ability to efficiently collect new information about the surroundings, such as 
surface inspection, object search, weed recognition, search and rescue missions, and others more.
The problem of Informative Path Planning (IPP), jointly with the problem of environment exploration, has been extensively
studied in literature and a rich variety of approaches have been proposed so far.
The majority of recent works had focused on novel hybrid approaches leveraging on the interplay between the concepts of
\textit{local} and \textit{global} exploration~\cite{selin2019efficient,schmid2020efficient}. 
In particular, the major issue behind such works is related to how efficiently combine the two local and global exploration steps,
and how to plan high informative global paths out of the current environment information. A limited number of
papers focused on improving the local exploration step~\cite{selin2019efficient}.
In this work we aim to push the current state-of-the-art toward a more smooth and resilient solutions for
local exploration in cluttered and possibly varying environments.
In particular, we focus on the problem of \textit{fast exploration} using Unmanned Aerial Vehicles (UAVs).
The high maneuverability of the adopted robot motivates solutions able to stress the quadrotor to fully exploit both its computational and dynamical capabilities.
\begin{figure}[!t]
	\centering
	\includegraphics[trim={20cm 10cm 23cm 10cm}, clip = true, scale=.15]{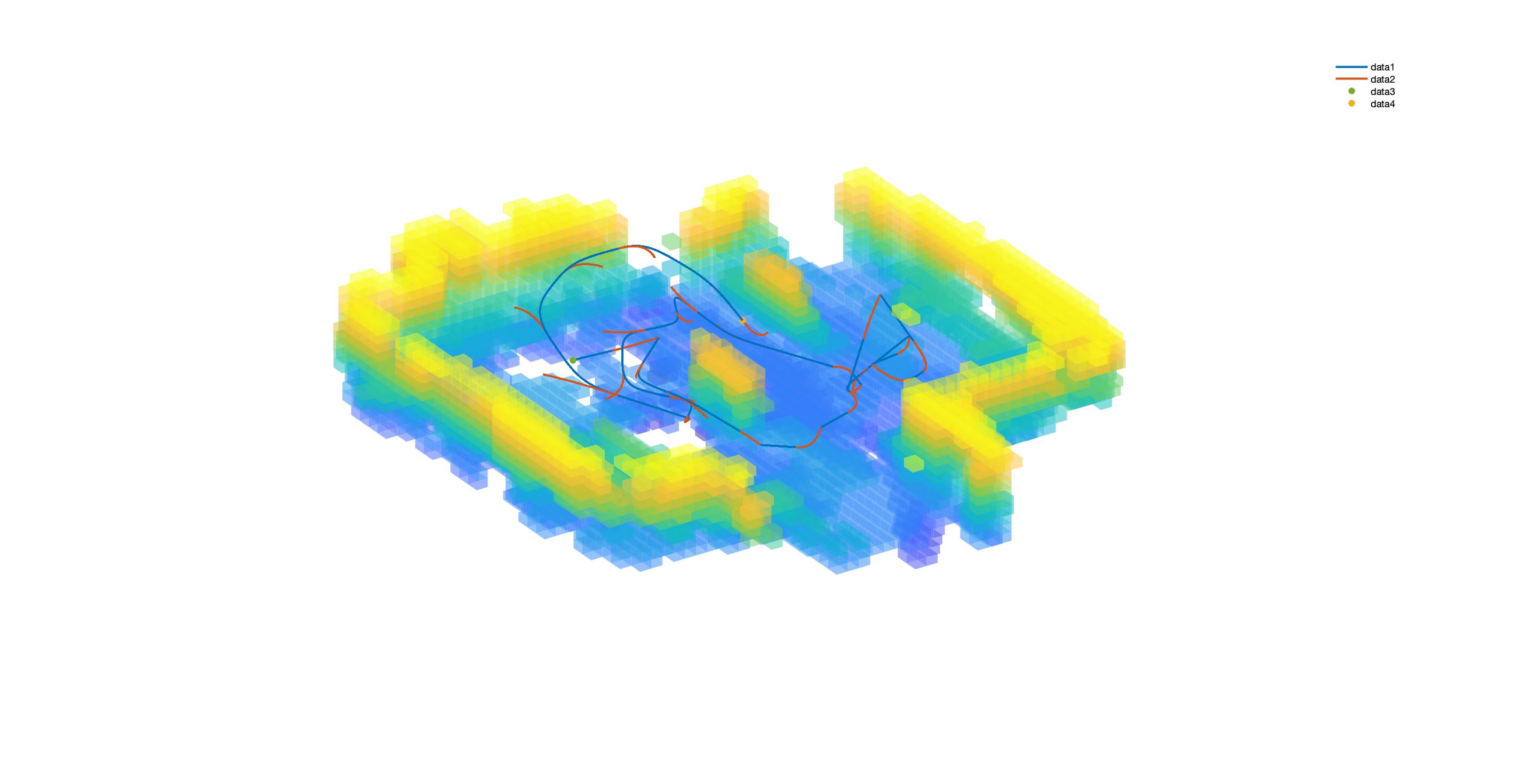}
	\vspace{-0.6cm}
	\caption{Qualitative evaluation of the proposed method in a real-world experiment.
			The Bézier-based exploration succeeded in fast planning motion inside the unknown area
			without forcing zero end velocities and successfully avoiding the two obstacles
			placed at the center. In the figure, blue lines represent the reference trajectory, while in red
			are depicted the planned \textit{safe} maneuvers.}\label{fig:REAL-SCENARIO-RESULTS}
\end{figure}
The core of this work consists of a new Rapid-exploring Random Tree (RRT) inspired sampling-based exploration algorithm that aims to directly
plan high informative feasible trajectories in known space, leading to an optimal local exploration procedure.
We show that the combination of the planning of both path and the associated timing law leads to a solution outperforming the state-of-the-art approaches
in this field, which usually plans point-to-point trajectories requiring stopping the robot at each exploration step.
In particular, we compared the proposed approach against one of the most relevant state-of-the-art solutions represented by~\cite{selin2019efficient}.
The core contributions of this work can be summarised as follows:
\begin{enumerate}
	\item We propose a new RRT-inspired Bézier-based local trajectory planner suitable for fast trajectory planning in a known environment avoiding inefficient ``stop-and-go'' like behaviors.
	The proposed approach is conceived to be easily extended and completed with a state-of-the-art global planning routine.
	\item We provide a novel and less conservative condition to guarantee the non-collision property of the generated path.
		  Such a condition is meant to be a natural extension of the well-known Bézier curve properties.
	\item Gaussian process inference is used to allow fast reconstruction gain retrieval, while its full computation
		  is left as a background thread. This guarantees fast trajectory planning while preserving consistency with the exploration task.
	\item We extensively tested the proposed solution both in simulation and in real-world scenarios. Furthermore, we released the code as an open-source ROS package\footnote{\href{https://github.com/casy-lab/BezierFastExploration.git}{github.com/casy-lab/BezierFastExploration.git}}
		  to encourage further developments and benchmarking.
\end{enumerate}
\section{Related Works}\label{sec:RELATED-WORKS}
Although the number of solutions presented in the literature is quite variegated, the majority of them can be classified as \textit{frontier-based} or \textit{sampling-based} methods.
The former class was pioneered in~\cite{yamauchi1997frontier} and  later more comprehensively developed in~\cite{julia2012comparison}. The key idea is to guide the agent toward the borders between free and unmapped space (aka frontiers),
since these points may represent those with higher potential information gain.
Exploration is then carried out by extracting the map frontiers and by navigating through them sequentially.
Several works propose to extend it by adding constraints to ensure low localization errors~\cite{stachniss2005information}.
The basic frontier-based approach has been also extended to high-speed flight for fast exploration in~\cite{cieslewski2017rapid}.
In this case, the authors propose to extract frontiers only inside the current Field-Of-View (FoV)
and select the one leading to the minimal change in velocity.
In recent years, other works focused on rapid exploration~\cite{zhou2021fuel}, by planning global coverage paths and optimising
them with respect to the robot dynamics, and on the reformulation of the frontier information gain as a differentiable function~\cite{deng2020robotic}, allowing paths to be optimised with gradient information.
On the other hand, \textit{sampling-based} methods typically sample random viewpoints to explore the space in a Next-Best-View (NBV)
fashion~\cite{connolly1985determination, maver1993occlusions}. Much of the work in this domain can be traced back to~\cite{gonzalez2002navigation}, where the NBV problem has been moved for the first time from the computer graphic field to the robotic domain,
with the introduction of the notion of reconstruction gain.
The concept of NBV exploration has been afterward extended by~\cite{bircher2016receding}, where the building of a RRT
allows one to weight both the amount of information gained at the viewpoints, and during the agent motion to reach each viewpoint.
Unlike frontier-based methods, which are difficult to adapt to other tasks, the sampling-based ones have the advantage to allow any kind of gain formulation.
Thanks to that, the original NBV algorithm was extended to consider the uncertainty of localisation~\cite{papachristos2017uncertainty, tovar2006planning}
and the visual importance of different objects~\cite{dang2018visual}.
On the other hand, sampling-based methods suffer from stacking in local minima, leading to a premature ending of the exploration
procedure in unlucky scenarios. For this reason, the recent trend is to merge the two frontiers and 
sampling-based approaches in a local-global exploration fashion.
The pioneer of this idea was~\cite{charrow2015information}, that utilises a frontier method to detect global goals and
supplements these with motion primitives for local exploration.
More recent approaches, instead, leverage the capabilities of sampling-based methods and employ additional planning
stages to escape from local minima~\cite{selin2019efficient,dang2019graph}.
Other approaches focus on memorising previously visited places and sampled information under the format of roadmaps~\cite{witting2018history, xu2021autonomous}.
Similarly, the work presented in~\cite{schmid2020efficient} continuously maintains and expands a single RRT of candidate paths.
Although the literature has seen some impressive works in the field of NBV, there are very few works concentrating on fast exploration.
Even if previous solutions are able to quickly plan globally coverage paths~\cite{schmid2020efficient, xu2021autonomous, kompis2021informed},
the problem of efficient trajectory planning is rarely addressed.
Recently,~\cite{dharmadhikari2020motion} proposes a novel primitive-based algorithm that addresses exactly this problem,
introducing a new planning paradigm where both path and timing law are allocated directly at the planning stage.
\section{Proposed Approach}\label{sec:PROPOSED-APPROACH}
In order to deal with the exploration problem, we employ a sampling-based receding-horizon trajectory planning approach.
Similarly to~\cite{selin2019efficient} and~\cite{bircher2016receding}, the central idea of the proposed algorithm is to expand
a RRT~\cite{lavalle1998rapidly} by iterative sampling new candidate viewpoints.
The obtained RRT is executed one node at a time, in a receding-horizon fashion.
Unlike previous works, our algorithm employs a Bézier curve parameterisation to grow and maintain a tree of possible trajectory segments.
The proposed approach weights both potential information gain and trajectory cost during the selection of the next goal.
Moreover, the planned trajectory does not constrain the end velocity to zero, thus the exploration can be performed quickly
by avoiding ``stop-and-go'' like behaviors.
The proposed algorithm is thought to be applied to Unmanned Aerial Vehicles (UAVs), by means of a quadrotor in our case, in order to perform optimal \textit{local} exploration steps inside a 3D volume, without keeping in consideration the possibility to plan global trajectories.
Motivated by the promising results obtained using hybrid approaches~\cite{selin2019efficient}, we allow adaptability of the proposed solution by letting the tree
to be easily extended with global exploration routines. In particular, we implemented efficient rewiring procedures in order to keep in memory and continuously
refine the same tree, following the ideas of roadmaps memorisation~\cite{xu2021autonomous} and continuous tree expansion~\cite{schmid2020efficient}.

\subsection{Bézier Trajectory Parameterization}\label{sec:TRAJECTORY-PARAMETERIZATION}
In this work, instead of using traditional polynomial functions, we adopt the Bernstein polynomial basis and define trajectories as piecewise Bézier curves.
A Bézier curve is completely defined by its degree $n$ and a set of $n+1$ control points $\CP=[q_0 \cdots q_{n}]$, with $q_i \in \R$.
The curve can be evaluated, for any $ u \in [0, 1]$, as
\begin{equation}
	\label{eq:BEZIER}
	q(u) = \sum_{i=0}^{n} B_i^{n}(u) q_i,
\end{equation}
where the basis functions $B_i^{n}(u)$ are $n$-th degree \textit{Bernstein basis polynomials}~\cite{farouki2012bernstein, biagiotti2008trajectory} of the form
\begin{equation*}
	B_i^n(u) = \frac{n!}{i!(n-i)!} u^i {(1-u)}^{n-i}.
\end{equation*}
The aforementioned polynomials enjoy a partition-of-unity property (i.e. $\sum_{i=0}^{n} B_i^{n}(u) = 1$ for all $u$),
by which the curve defined by Equation~\eqref{eq:BEZIER} is constrained inside the convex hull generated by its control points $\CP$.
Moreover, a $n$-degree Bézier curve is always $n$ times differentiable and its derivatives preserve a Bézier structure of lower degree.
In particular, $q'(u) := dq/du$ is a Bézier curve of order $n-1$ whose control points $\CP'$ can be evaluated as
$q'_i = n(q_{i+1}-q_i)$ $\forall i=0,\dots,n-1.$
The overall quadrotor reference trajectory can be expressed through the evolution of its \textit{flat outputs}~\cite{mellinger2011minimum},
$\sigma = {\left[ \rr, \phi \right]}^T$, where $\rr={\left[x, y, z \right]}^T \in \R^3$ represents the coordinates of the
center of mass in the world coordinate system, while $\phi \in \R$ is the yaw angle.
Both the quantities $\rr$ and $\phi$ are expressed as $m$-segment piecewise Bézier curves of order $n_r$ and $n_{\phi}$, respectively
\begin{equation*}
	\rr(t)=
	\begin{cases}
		\sum_{i=0}^{n_r} B_i^{n_r}(\delta_1) \rr^1_i & \hspace{-0.2cm} t \in [T_0, T_1],\\
		\sum_{i=0}^{n_r} B_i^{n_r}(\delta_2) \rr^2_i & \hspace{-0.2cm} t \in [T_1, T_2],\\
		\hspace{0.2cm} \vdots & \vdots \\
		\sum_{i=0}^{n_r} B_i^{n_r}(\delta_m) \rr^m_i & \hspace{-0.2cm} t \in [T_{m-1}, T_{m}]\\
	\end{cases}
\end{equation*}
with $\tau_i = \frac{t-T_{i-1}}{T_{i}-T_{i-1}}$. The same definition holds also for $\phi(t)$, with $n_r$ and $\rr^j_i$ substituted by $n_{\phi}$ and $\phi_{i}^j$.
The quantities $\rr_i^j \in \R^3$ and $\phi_i^j \in \R$
describe the $i^{th}$ control point of the $j^{th}$ trajectory segment of $\rr(t)$ and $\phi(t)$ respectively,
while $T_{j-1}$ and $ T_j$ are the start and end time of the $j^{th}$ trajectory segment.
Note that the introduced time scaling does not affect the spatial path described by the Bézier curve, but strongly affects its derivatives  as 
\begin{align*}
	{\rr'}^j_i & = \frac{n_r(\rr^j_{i+1}-\rr^j_i)}{T_{j-1}-T_j} \hspace{0.2cm} \forall i=0,\dots,n_r-1 , \\
	{\phi'}^j_i & = \frac{{n_{\phi}}(\phi^j_{i+1}-\phi^j_i)}{T_{j-1}-T_j} \hspace{0.2cm} \forall i=0,\dots,n_{\phi}-1 .
\end{align*}
\begin{figure}[!t]
	\begin{minipage}{.45\linewidth}
		\centering
		\subfloat[]{\vspace{-0.8cm}\label{fig:BEZIER-CURVE-A}\includegraphics[scale=.6]{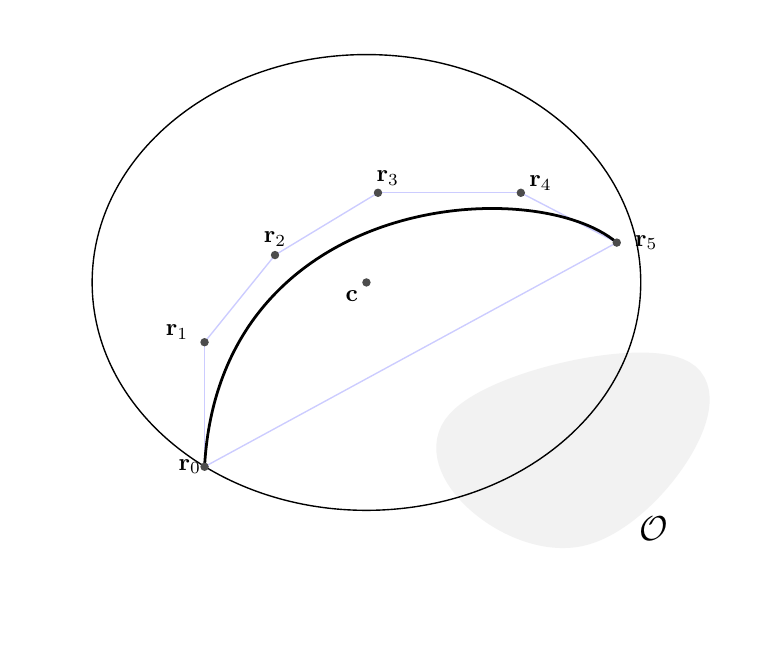}}
	\end{minipage}
	\begin{minipage}{.45\linewidth}
		\centering
		\subfloat[]{\vspace{-0.8cm}\label{fig:BEZIER-CURVE-B}\includegraphics[scale=.6]{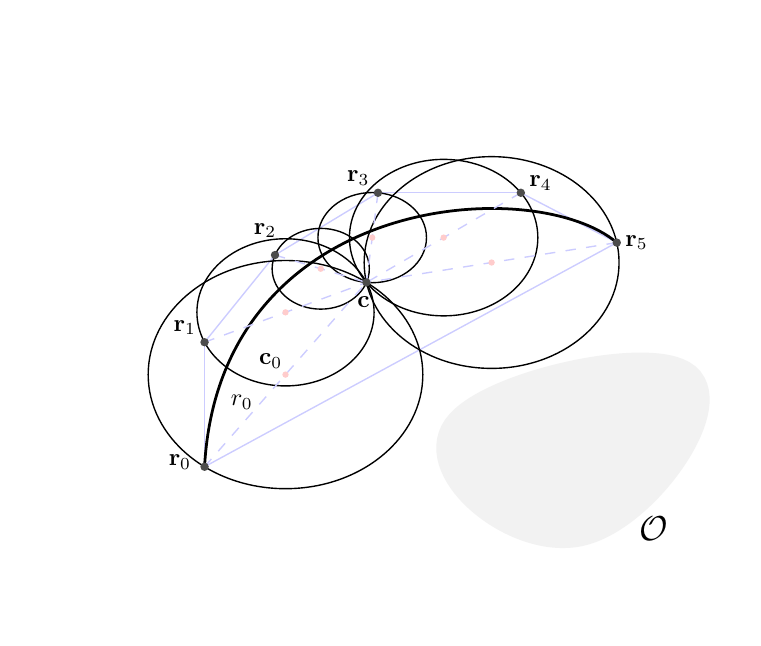}}
	\end{minipage}
	\caption{Representation of a fifth-degree Bézier curve with
	(a) the classical sphere used for collision checking~\cite{tang2020real}
	(b) the multiple spheres envelope used in this work.
	The $\mathcal{O}$ shaded gray area represents a generic obstacle.}\label{fig:BEZIER-CURVE}
\end{figure}
The \textit{convex hull containment} property is a powerful tool to verify both the trajectory feasibility
in terms of dynamic constraints, such as velocity or acceleration bounds, and to check for collisions.
\figref{fig:BEZIER-CURVE-A} reports the classical condition used for collision checking with Bézier curves~\cite{tang2020real},
where the overall curve is constrained inside a \textit{safe} sphere.
The aforementioned approach often results in being too conservative, as a matter of fact the considered sphere
is far to be tight over the convex hull, and thus over the curve itself.
For this reason we formulate a new proposition that represents a less conservative tool
to verify collision (see \figref{fig:BEZIER-CURVE-B}).
\begin{proposition}\label{th:ENVELOPE-CONTAINMENT}
	Let $\rr(u)$ be a Bézier curve of order $n$, with control points $\CP= [\rr_0, \dots, \rr_n]$.
	Moreover, let $r_i \in \R$  and $\cc_i \in \R^3$ with $i=0, \dots, n$ be respectively the radii
	and centre of $n$ spheres ($\mathcal{C}_0 \dots \mathcal{C}_{n}$),
	defined as:
	\begin{equation*}
		\begin{split}
			\cc_i & = (\rr_i + \cc)/2, \\
			r_i & = \| \rr_i - \cc_i \|,
		\end{split}
	\end{equation*}
	with $\cc$ be the centre of the convex hull generated by $\CP$, i.e. $\cc = \sum_{i=0}^{n} \rr_i/n$.
	Then the curve $\rr(u)$ is entirely contained inside the spheres envelope, namely
	\begin{equation*}
		\rr(u) \in \bigcup_{i=1}^{n} \mathcal{C}_i \hspace{0.5cm} \forall u \in [0,1].
	\end{equation*}
\end{proposition}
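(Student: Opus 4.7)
The plan is to reduce the geometric containment claim to a short algebraic inequality by recognising each $\mathcal{C}_i$ as a \emph{diametral sphere}, and then exploiting the partition-of-unity property of the Bernstein basis.

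First I would observe that $\mathcal{C}_i$, whose centre is the midpoint of $\cc$ and $\rr_i$ and whose radius equals half the distance between them, is exactly the sphere having the segment $\overline{\cc\,\rr_i}$ as a diameter. By the (dimension-free) converse of Thales' theorem, a point $p \in \R^3$ belongs to $\mathcal{C}_i$ if and only if $(p - \cc) \cdot (p - \rr_i) \leq 0$. Consequently, the proposition reduces to showing that, for every $u \in [0,1]$, there exists at least one index $i \in \{0,\dots,n\}$ making this scalar product non-positive at $p := \rr(u)$.

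Next I would exploit the identity $\sum_{i=0}^{n} B_i^n(u)\,(p - \rr_i) = p - \sum_{i=0}^{n} B_i^n(u) \rr_i = 0$, which is an immediate consequence of the partition of unity together with the definition of $\rr(u)$. Taking the scalar product of this zero vector with the fixed vector $p - \cc$ yields
\begin{equation*}
\sum_{i=0}^{n} B_i^n(u)\, (p - \cc) \cdot (p - \rr_i) \;=\; 0.
\end{equation*}
Since the weights $B_i^n(u)$ are non-negative and sum to one, this identity expresses zero as a convex combination of the real numbers $(p - \cc) \cdot (p - \rr_i)$; these cannot all be strictly positive, so at least one index $i$ with $B_i^n(u) > 0$ must satisfy $(p - \cc) \cdot (p - \rr_i) \leq 0$, and hence $p \in \mathcal{C}_i$ by the Thales characterisation.

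The only genuinely new ingredient is the diametral-sphere reformulation of $\mathcal{C}_i$; once that step is in place, the rest follows automatically from the convex-combination structure of Bézier curves. I do not anticipate any technical obstacle — in fact, the argument does not use the specific choice of $\cc$ as the centroid, so the tighter envelope depicted in \figref{fig:BEZIER-CURVE-B} could be built around any reference point, confirming that the construction is strictly less conservative than the single circumscribed sphere of \figref{fig:BEZIER-CURVE-A}.
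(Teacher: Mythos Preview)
Your proof is correct and takes a genuinely different route from the paper. The paper argues geometrically: it notes that $\cc$ lies in the convex hull (being a convex combination of the $\rr_i$) and then invokes the triangle inequality $\|\rr_i-\rr_{i+1}\|\le\|\rr_i-\cc\|+\|\rr_{i+1}-\cc\|$ to assert that each edge $\overline{\rr_i\rr_{i+1}}$ is covered by $\mathcal{C}_i\cup\mathcal{C}_{i+1}$, and from this infers containment of the curve via the convex-hull property. Your argument bypasses the convex hull altogether: recognising each $\mathcal{C}_i$ as the diametral sphere on $\overline{\cc\,\rr_i}$ turns membership into the sign condition $(p-\cc)\cdot(p-\rr_i)\le 0$, after which the Bernstein partition of unity gives a convex combination of these scalars equal to zero, forcing at least one to be non-positive. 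This is more self-contained (no separate appeal to convex-hull containment) and, as you observe, strictly more general since the centroid plays no role --- any reference point $\cc$ works. The paper's route keeps the geometry visible but, as stated, is somewhat informal: covering the edges $\overline{\rr_i\rr_{i+1}}$ is not by itself the same as covering the full three-dimensional hull, whereas your pigeonhole step closes the argument directly at the curve level.
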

\begin{proof}
    The proof follows from the fact that $\cc$ belongs to the convex hull and that the spheres envelope composed by
	$\mathcal{C}_i$ and $\mathcal{C}_{i+1}$ always contains the convex hull edge $\overline{\rr_i\rr_{i+1}}$.
	The first statement is true by construction, since $\cc$ is a linear combination of $\rr_i$, while the second one
	follows from the triangle inequality $\| \rr_i - \rr_{i+1} \| \le \| \rr_i - \cc \| + \| \rr_{i+1} - \cc \|$. 
\end{proof}

The aforementioned proposition states that the convex hull containment property can be reformulated taking into account
a set of $n$ spheres. Since this set of spheres results to be tighter around the curve with respect to a single big safe ball,
the use of this proposition in formulating a new collision condition results in a less conservative approach.
The following proposition states the sufficient condition for non-collision as a corollary of Proposition~\ref{th:ENVELOPE-CONTAINMENT}.
\begin{proposition}\label{th:COLLISION-FREE}
	Let $\rr(u)$ be a Bézier curve of order $n$, with control points $\CP= [\rr_0, \dots, \rr_n]$.
	Moreover, let $\cc_i$ and $r_i$ with $i=0,\dots,n$ be the centre and radii of $n$ spheres defined as in Proposition~\ref{th:ENVELOPE-CONTAINMENT}.
	The curve $\rr(u)$ is said to be collision-free, with a safety bound of $d^{\text{safe}} \in \R_{+}$, if
	the condition $r_i - d^{\text{obs}}_{\cc_i} - d^{\text{safe}} > 0$
	holds $ \forall i = 0, \dots, n$, where $d^{\text{obs}}_{\cc_i}$ represents the Euclidean distance of $\cc_i$ from the closest obstacle.
\end{proposition}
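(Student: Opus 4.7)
The plan is to bootstrap the non-collision claim off Proposition~\ref{th:ENVELOPE-CONTAINMENT} by turning the continuous collision check over $u\in[0,1]$ into a finite family of sphere-versus-obstacle checks. First I would invoke Proposition~\ref{th:ENVELOPE-CONTAINMENT} to assert that the image of $\rr(u)$ is contained in $\bigcup_{i=0}^{n}\mathcal{C}_i$. Consequently, whenever each individual ball $\mathcal{C}_i$ keeps its surface at distance at least $d^{\text{safe}}$ from every obstacle, the curve itself automatically enjoys the same safety margin along its entire length.

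Second, I would translate the per-sphere safety requirement into the scalar inequality appearing in the statement. For $\mathcal{C}_i=\{\x\in\R^3:\|\x-\cc_i\|\le r_i\}$ and any obstacle point $\p$, the minimum distance from $\p$ to the boundary of $\mathcal{C}_i$ is $\|\p-\cc_i\|-r_i$; taking the infimum over the obstacle set and recalling that $d^{\text{obs}}_{\cc_i}$ denotes the Euclidean distance from $\cc_i$ to the closest obstacle, the ball is cleared by more than $d^{\text{safe}}$ precisely when the gap between $d^{\text{obs}}_{\cc_i}$ and $r_i$ strictly exceeds $d^{\text{safe}}$. This is exactly the scalar condition written in the proposition, once its terms are read in accordance with the sign convention the paper adopts for the obstacle-distance map.

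The third step is to chain the two ingredients. For arbitrary $u\in[0,1]$, Proposition~\ref{th:ENVELOPE-CONTAINMENT} supplies an index $i(u)$ with $\rr(u)\in\mathcal{C}_{i(u)}$; a single application of the triangle inequality then bounds the distance from $\rr(u)$ to any obstacle point by $d^{\text{obs}}_{\cc_{i(u)}}-r_{i(u)}$, which by hypothesis is larger than $d^{\text{safe}}$. Taking the supremum over $u$ closes the argument, and the proposition is obtained as a direct corollary of Proposition~\ref{th:ENVELOPE-CONTAINMENT} with no additional machinery.

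The main hurdle I anticipate is notational rather than geometric. As written, $r_i - d^{\text{obs}}_{\cc_i} - d^{\text{safe}} > 0$ reads, under an unsigned interpretation of $d^{\text{obs}}_{\cc_i}$, as the opposite of the clearance inequality the geometry demands, so before typesetting a formal proof I would want to pin down whether $d^{\text{obs}}_{\cc_i}$ is meant as an unsigned nearest-obstacle distance (in which case the terms in the statement must be rearranged to $d^{\text{obs}}_{\cc_i}-r_i-d^{\text{safe}}>0$), or as some other convention such as a signed penetration depth that flips the direction automatically. Once that convention is fixed, the three-step plan above goes through essentially in one line.
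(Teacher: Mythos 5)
Your argument is correct and is exactly the intended one: the paper gives no explicit proof of Proposition~\ref{th:COLLISION-FREE}, merely declaring it a corollary of Proposition~\ref{th:ENVELOPE-CONTAINMENT}, and your three steps (containment of the curve in the sphere envelope, per-sphere clearance, one triangle inequality per point) are precisely how that corollary is meant to be obtained. Your notational concern is also well founded and worth recording: since $d^{\text{obs}}_{\cc_i}$ is defined as the unsigned Euclidean distance from $\cc_i$ to the nearest obstacle, the printed condition $r_i - d^{\text{obs}}_{\cc_i} - d^{\text{safe}} > 0$ would place the obstacle \emph{inside} the sphere $\mathcal{C}_i$, and the inequality must be read as $d^{\text{obs}}_{\cc_i} - r_i - d^{\text{safe}} > 0$ for the clearance argument to close.
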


From now on, we use fifth-order Bézier curves to represent the quadrotor position ($n_r=5$),
while the yaw trajectory is parameterised using third-order Bézier curves ($n_{\phi}=3$).

\subsection{Tree Structure}\label{sec:TREE-STRUCTURE}
The proposed algorithm works by growing and maintaining, at each iteration, a tree $\T = (\N, \E)$ of possible trajectories.
Such tree consists of a set of nodes $\N = \{N_1, \dots, N_{n_n}\}$ and a set of edges $\E= \{ E_i, \dots , E_{n_e} \}$.
Each node $N_i$ is completely defined by the following five quantities
\begin{equation*}
	N_i = \left\{ g_i, c_i, \delta_i, \CP^{\rr}_i, \CP^{\phi}_i \right\}
\end{equation*}
where $g_i = g(N_i)$ represents the amount of information gained if that node is executed,
and $c_i = c(N_i)$ is the cost associated to the node execution. $\CP^{\rr}_i$ and $\CP^{\phi}_i$
are the two sets of control points defining the trajectories $\rr_i(t)$ and $\phi_i(t)$, while $\delta_i$ is the execution time.
Two nodes $N_{i-1}$ and $N_{i}$ are connected by an edge $E_{i-1}$ only if the first $(n_{\rr/\phi}+1)/2$
control points of the latter node satisfy some continuity criterion with the last $(n_{\rr/\phi}+1)/2$ control points of the former one.
This constraint is required to ensure continuity among all trajectory segments of the tree.
In particular, since $n_{\rr} = 5$ and $n_{\phi} = 3$, we enforce continuity up to the third derivative along
$\rr(t)$ and continuity up to the second derivative along $\phi(t)$, namely
\begin{align}
	\label{eq:FIRST-COND}
	&\rr^{i}_0  = \rr^{i-1}_5,\\
	\label{eq:SECOND-COND}
	&\frac{1}{\delta_{i}}(\rr^{i}_1 - \rr^{i}_0)  = \frac{1}{\delta_{i-1}}(\rr^{i-1}_5 - \rr^{i-1}_4),\\
	\label{eq:THIRD-COND}
	&\frac{1}{\delta_{i}^2}(\rr^{i}_2 - 2\rr^{i}_1 + \rr^{i}_0)  = \frac{1}{\delta_{i-1}^2}(\rr^{i-1}_5 - 2\rr^{i-1}_4 + \rr^{i-1}_3),\\
	\label{eq:FOURTH-COND}
	&\phi^{i}_0  = \phi^{i-1}_3,\\
	\label{eq:FIFTH-COND}
	&\frac{1}{\delta_{i}}(\phi^{i}_1 - \phi^{i}_0)  = \frac{1}{\delta_{i-1}}(\phi^{i-1}_3 - \phi^{i-1}_2).
\end{align}
The aim is to plan sub-optimal trajectories by maximising a user-specified utility function $J(\Rr(N_i))$,
with $\Rr(N_i)$ be the sequence of nodes connecting $N_i$ to the tree root, which properly combines gains
and costs of all nodes in $\Rr(N_i)$. In this context, the tree root is defined as the tree node which is
about to be executed by the flying agent.
It results that the agent behavior strongly depends on the choice of functions
$g(N_i)$, $c(N_i)$ and $J(\Rr(N_i))$. The proposed algorithm is agnostic with respect to these functions.
Therefore, the user can specify any formulation of them by ensuring that the following criteria are satisfied~\cite{schmid2020efficient}:
\begin{enumerate}
	\item $g(N_i)$ should be a function that depends on the trajectory end position only ($g(\rr^i_5, \phi^i_3)$),
	\item all node gains should be mutually independent,
	\item $c(N_i)$ is required to be an intrinsic property of the trajectory ($c(\CP^{\rr}_i, \CP^{\phi}_i, \delta_i)$).
\end{enumerate}
% We discuss possible choices of these functions in Section~\ref{sec:COST-FORMULATION}.

\subsection{Tree Update}\label{sec:TREE-UPDATE}
The tree, initially composed by just one root node, is iteratively expanded by randomly sampling viewpoints inside a sphere
centred on the current \textit{best node} ($N_{\text{best}}$), namely the one among all tree nodes that maximise the utility
function $J(\cdot)$. In particular, the sphere is centred exactly on the last control point of $\CP^{\rr}_{\text{best}}$,
i.e. $\rr^{\text{best}}_5$, while its radius ($r_{\text{sp}}$) is a user chosen value defined as a parameter for the algorithm.
The sampled viewpoint is retained only if it belongs to a known and free part of the environment under exploration and,
at the same time, it is far enough from the mapped obstacles.
Such viewpoint is considered as the last control point of the next trajectory segment ($\rr^{i}_5$). Moreover, due to Condition~\eqref{eq:FIRST-COND},
also the control point $\rr^{i}_0$ is already defined to ensure position continuity.
As regards the heading trajectory, the first control point ($\phi^i_0$) is established through Condition~\eqref{eq:FOURTH-COND},
while the last one ($\phi^i_3$) is chosen as the value that maximise the potential information gain $g(\rr_5^i, \phi)$, namely
\begin{equation*}
	\phi_i^3 = \arg \max_{\phi} \ g(\rr_5^i, \phi),
\end{equation*}
in a similar way as done in~\cite{selin2019efficient}.
The choice of the remaining points ($\CP^{\rr}_i\left[ 1:4 \right]$, $\CP^{\phi}_i \left[ 1:2 \right]$) and the trajectory duration ($\delta_i$)
is performed concurrently. In particular, the interval of admissible trajectory duration $\Delta = [\delta_{\text{min}}, \delta_{\text{max}}]$
is uniformly discretised as
\begin{equation*}
	\Delta_{\text{d}} = \left\{ \delta_{\text{min}},\  \delta_{\text{min}} + \Delta_{\delta}, \  \delta_{\text{min}} + 2\Delta_{\delta},\  \dots, \ \delta_{\text{max}} \right\},
\end{equation*}
with $\Delta_{\delta} = \frac{\delta_{\text{max}} - \delta_{\text{min}}}{r}$, leading to $r+1$ possible time intervals.
For any $\delta \in \Delta_{\text{d}}$, the control points $\rr^{i}_1$ and $\rr^{i}_2$ are computed exploiting Condition~\eqref{eq:SECOND-COND}
and Condition~\eqref{eq:THIRD-COND}. If the obtained points do not satisfy Proposition~\ref{th:COLLISION-FREE}
the current $\delta$ is discarded, otherwise also the points $\rr^{i}_3$ and $\rr^{i}_4$, as well as $\phi^i_1$
(Condition~\eqref{eq:FIFTH-COND}) and $\phi^i_2$ are computed. 
Note that the quantities $\rr^{i}_3$, $\rr^{i}_4$ and $\phi^i_2$ are not constrained by any conditions~\eqref{eq:FIRST-COND}--\eqref{eq:FIFTH-COND}, thus these points are computed by optimising the induced cost $c(\CP_i^{\rr}, \CP_i^{\phi}, \delta_i)$.
In the same way as before, if the computed points violate Proposition~\ref{th:COLLISION-FREE},
or the induced velocity or acceleration exceed the dynamic bounds, the current $\delta$ is discarded.
Once all $\delta \in \Delta_{\text{d}}$ have been considered, the one leading to the optimal value of $c_i$ is selected with
the corresponding computed control points and the node is added to the tree.
The tree growth continues until it became impossible to find a new node with higher information gain $g(\cdot)$ and the number
of sampled nodes go beyond a given threshold ($n_{\text{max}}$). Once the tree expansion is terminated, the branch leading to
the best node is extracted and only the first node of such branch ($N_\text{opt}$) is executed.

It may happen that the tree growth procedure takes too much time, or it may result impossible to find a
valid candidate as next trajectory segment. In order to handle these issues, at each iteration two trajectory segments are computed.
The first one corresponds to the execution of the best branch, while the second one is a \emph{safe} trajectory,
linked via Constraints~\eqref{eq:FIRST-COND}--\eqref{eq:FIFTH-COND} to the first committed segment.
The \emph{safe} trajectory constrains the final velocity to be zero, as well as the final acceleration,
and it is executed every time the algorithm fails in planning a new node.

\section{Cost Formulation}\label{sec:COST-FORMULATION}
The algorithm presented in Section~\ref{sec:PROPOSED-APPROACH} is used to plan spatial trajectories by maximising
the total utility function $J(\cdot)$. As a consequence, since this function combines both node gain and cost,
the choice of functions $g(\cdot)$ and $c(\cdot)$, as well as $J(\cdot)$ itself, is crucial for the success of the exploration procedure and of its performance.
\subsection{Reconstruction Gain $g(\rr, \phi)$}\label{sec:RECONSTRUCTION-GAIN}
The reconstruction gain is defined as the amount of space that can be discovered if the agent is located in the considered position ($\rr$)
and oriented with a given heading angle ($\phi$).
The function $g(\cdot)$ can be computed by casting rays outward from the sensor and summing up all the unmapped volume elements that the ray crosses.
Although there exist very efficient procedures useful to compute $g(\cdot)$, such as sparse ray-casting~\cite{selin2019efficient},
its explicit evaluation is still the bottleneck for most of the exploration algorithms proposed within the literature.
The work~\cite{selin2019efficient}, motivated by the continuous nature of the reconstruction gain over its domain,
tries to overtake this problem by modelling it as a realisation of a Gaussian Process (GP)~\cite{rasmussen2003gaussian}.
The idea is to infer, when possible, the gain value using previously sampled data, avoiding its explicit computation at each exploration iteration.
This approach has the limitation that when the process returns a \textit{poor} (in terms of resulting variance) estimation,
the gain must be explicitly re-computed, leading to a higher overhead due to the double computation.
In this work we show that it is possible to completely avoid the gain explicit computation in real-time, during the planning procedure,
as it can be left as a background thread.
Unlike previous approaches, we propose to evaluate $g(\cdot)$ exclusively through Gaussian Process inference.
The motivating assumption is that in previously unexplored areas the reconstruction gain evaluates as the sensor FoV volume.
Therefore we impose a GP prior $g(\rr) \sim \mathcal{GP}(V_{\text{fov}}, k(\rr, \rr', \tau))$ consisting of a constant mean function, equivalent to the sensor FoV volume, and a \textit{squared-exponential} kernel
\begin{equation*}
	k(\rr, \rr', \tau) = \exp \left( - \frac{\| \rr - \rr' \|^2_2 }{2\tau^2} \right),
\end{equation*}
where $\tau$ is a hyper-parameter known as \textit{characteristic length-scale},
iteratively estimated by minimising the associated log-likelihood function~\cite{rasmussen2003gaussian}.
The proposed approach alternates between gain prediction, using the currently sampled data and the current estimation of the hyper-parameter,
and correction, where $\tau$ is estimated by minimising the log-likelihood over the data.

\subsection{Trajectory Cost $c(N_i)$}\label{sec:TRAJECTORY-COST}
In autonomous exploration applications, where classical RRT algorithms are employed, the trajectory cost is usually associated with node
distance~\cite{selin2019efficient, bircher2016receding}, or execution time~\cite{schmid2020efficient}. The former penalises long trajectories,
while the latter pushes the agent near its dynamical limits in order to execute the task as fast as possible. Recent studies about frontier
exploration~\cite{cieslewski2017rapid} have shown great results in terms of execution time and traveled distance. In these works, viewpoints
are selected considering minimal variations in velocity.
Encouraged by the success of these algorithms and keeping in mind the necessity to end the exploration as fast as possible,
we propose a trajectory cost that weight execution time and total control effort. The overall trajectory cost is formalised as
\begin{equation*}
	c(\CP^{\rr}_{i}, \CP^{\phi}_{i}, \delta_{i}) = \mu_1 \delta_i + \mu_2 c_{\rr}(\CP^{\rr}_{i}, \delta_{i}) + \mu_3 c_{\phi}(\CP^{\phi}_{i}, \delta_i),
\end{equation*}
where $\mu_{1:3}$ are tuning parameters, while $c_{\rr}(\cdot)$ and $c_{\phi}(\cdot)$ take the following form
\begin{align}
	\label{eq:TRAJ-COST-R}
	c_{\rr}(\CP^{\rr}_i, \delta_i) & = \int_0^{\delta_i}  \norm{\frac{d^k \rr^i(\frac{\tau}{\delta_i})}{d\tau^k}}^2 d\tau, \\
	\label{eq:TRAJ-COST-PHI}
	c_{\phi}(\CP^{\phi}_i, \delta_i) & = \int_0^{\delta_i}  \norm{\frac{d^p \phi^i(\frac{\tau}{\delta_i})}{d\tau^p}}^2 d\tau.
\end{align}
In this particular case we selected $k = 2$ and $p = 1$, leading to trajectories with minimal accelerations and angular velocities.
Minimising the angular velocity has several benefits in terms of mapping reconstruction accuracy, due to the fact that
the captured data present low blur effect, especially when working with rgb cameras.
Note that the three components of $\rr(\cdot) = [r_x(\cdot), r_y(\cdot), r_z(\cdot)]$ are decoupled inside the cost function,
thus Equation~\eqref{eq:TRAJ-COST-R} can be rewritten as
\begin{equation}
	\label{eq:TRAJ-COST-R-DECOUPLED}
	c_{\rr}(\CP^{\rr}_i, \delta_i) = \sum_{j}^{x,y,z} \int_0^{\delta_i}  \frac{d^k r^i_{j}(\frac{\tau}{\delta_i})}{d\tau^k}^2 d\tau.
\end{equation}
The Bernstein basis parameterisation is closed with respect to operations of derivative, power elevation and integral~\cite{farouki2012bernstein},
thus the Equations~\eqref{eq:TRAJ-COST-PHI} and~\eqref{eq:TRAJ-COST-R-DECOUPLED} can be evaluated in closed form just acting on the trajectory
control points in the following way
\begin{align*}
    c_{\rr}(\CP^{\rr}_i, \delta_i) & =
	\begin{bmatrix}
		\rr^i_0 & \cdots & \rr^i_{n_{\rr}}
	\end{bmatrix}
	B_{\rr}(\delta_{i})
	\begin{bmatrix}
		\rr^i_0 & \cdots & \rr^i_{n_{\rr}}
	\end{bmatrix}^T, \\
	c_{\phi}(\CP^{\phi}_i, \delta_i) & =
	\begin{bmatrix}
		\phi^i_0 & \cdots & \phi^i_{n_{\phi}}
	\end{bmatrix}
	B_{\phi}(\delta_i)
	\begin{bmatrix}
		\phi^i_0 & \cdots & \phi^i_{n_{\phi}}
	\end{bmatrix}^T
\end{align*}
where $B_{\rr}(\delta_{i})$ and $B_{\phi}(\delta_{i})$ are the matrix form of the Bézier curves $c_{\rr}(\cdot)$ and $c_{\phi}(\cdot)$~\cite{qin2000general}.
We take advantage of this property during the planning stage, when selecting the remaining free points $\rr_3^i$, $\rr_4^i$ and $\phi_2^i$.
These are computed solving the following optimisation problem
\begin{equation*}
	\min_{\rr^i_3, \rr^i_4, \phi^i_2} 
	\begin{bmatrix}
		\rr^i_0 \\ \vdots \\ \rr^i_{n_{\rr}} \\ \phi^i_0 \\ \vdots \\ \phi^i_{n_{\phi}}
	\end{bmatrix}^T
	\begin{bmatrix}
		\mu_2 B_{\rr}(\delta_{i}) \\
		\mu_3 B_{\phi}(\delta_i)
	\end{bmatrix}
	\begin{bmatrix}
		\rr^i_0 \\ \vdots \\ \rr^i_{n_{\rr}} \\ \phi^i_0 \\ \vdots \\ \phi^i_{n_{\phi}}
	\end{bmatrix},
\end{equation*}
which results in an unconstrained QP problem, solvable by equalising the gradient to zero in a similar way as done in~\cite{richter2016polynomial}.

%%%%%%%%%%%
\begin{figure*}[t]
	\centering
	%%%%%%%%%% first column %%%%%%%%%%
	\begin{subfigure}[t]{0.32\textwidth}	
		\centering
		\includegraphics[trim={3cm 1.5cm 3cm 1cm}, clip = true, width = 1.1\textwidth]{./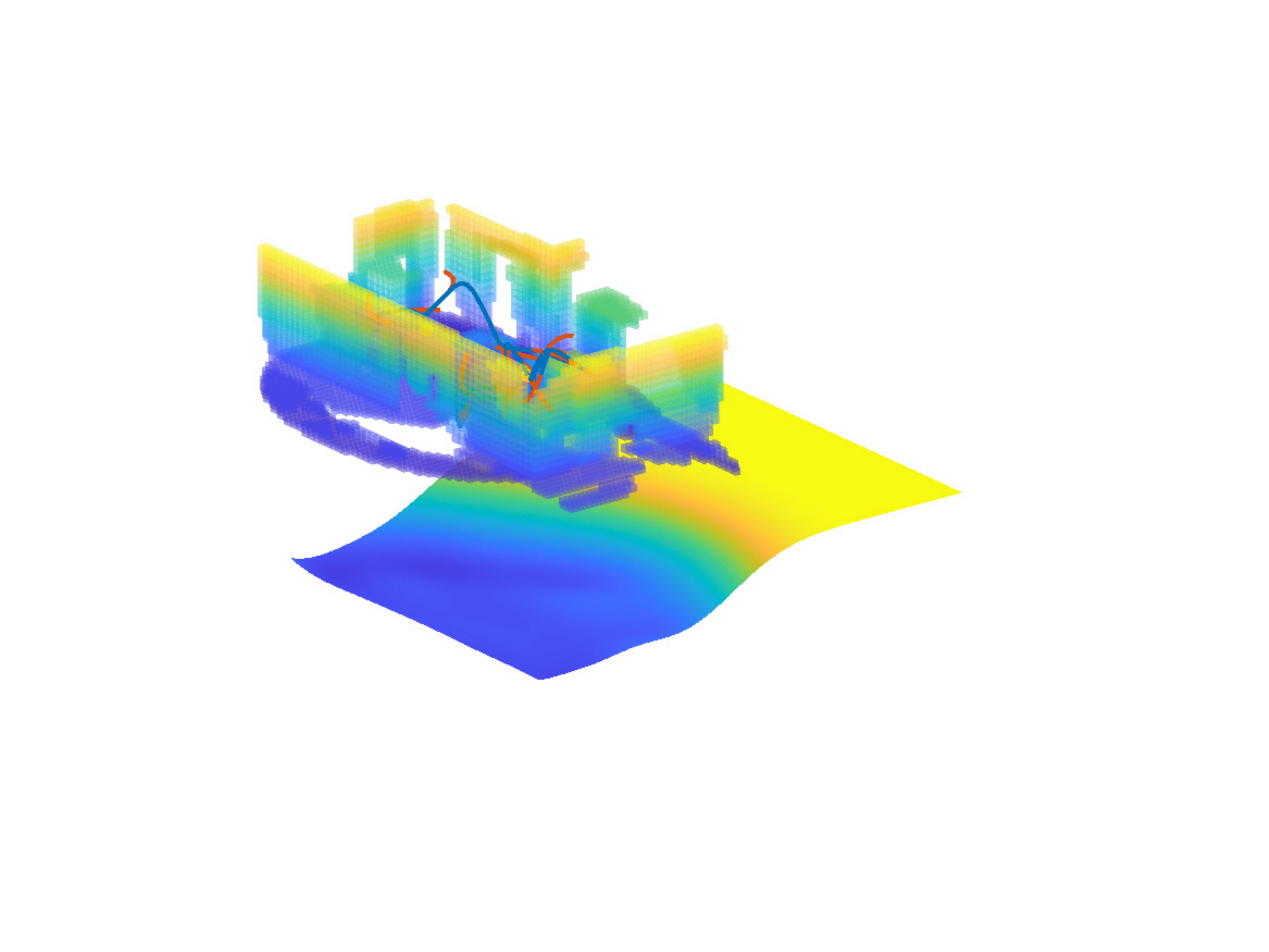}
		\vspace{-1.4cm}
		\caption{Exploration state at $100$ seconds.}\label{fig:EXP-SIM-TEST-A}
	\end{subfigure}
	%%%%%%%%%% second column %%%%%%%%%%
	\begin{subfigure}[t]{0.32\textwidth}
		\centering
		\includegraphics[trim={5cm 1.5cm 3cm 1cm}, clip = true, width = 1\textwidth]{./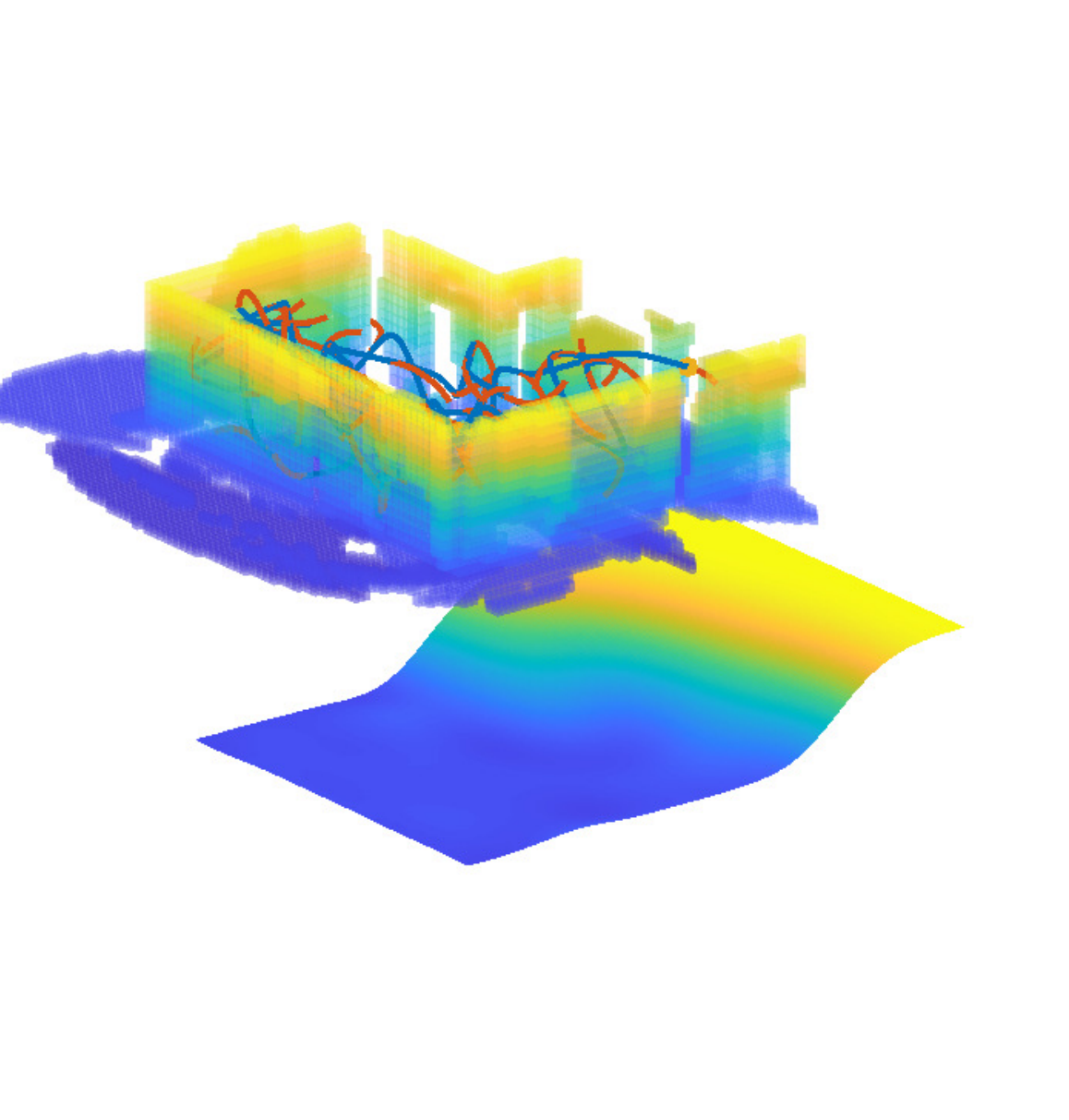}
		\vspace{-1.4cm}
		\caption{Exploration state at $200$ seconds.}\label{fig:EXP-SIM-TEST-B}
	\end{subfigure}
	%%%%%%%%%% Third column %%%%%%%%%%
	\begin{subfigure}[t]{0.32\textwidth}
		\centering
		\includegraphics[trim={5cm 1.5cm 3cm 1cm}, clip = true, width = 1\textwidth]{./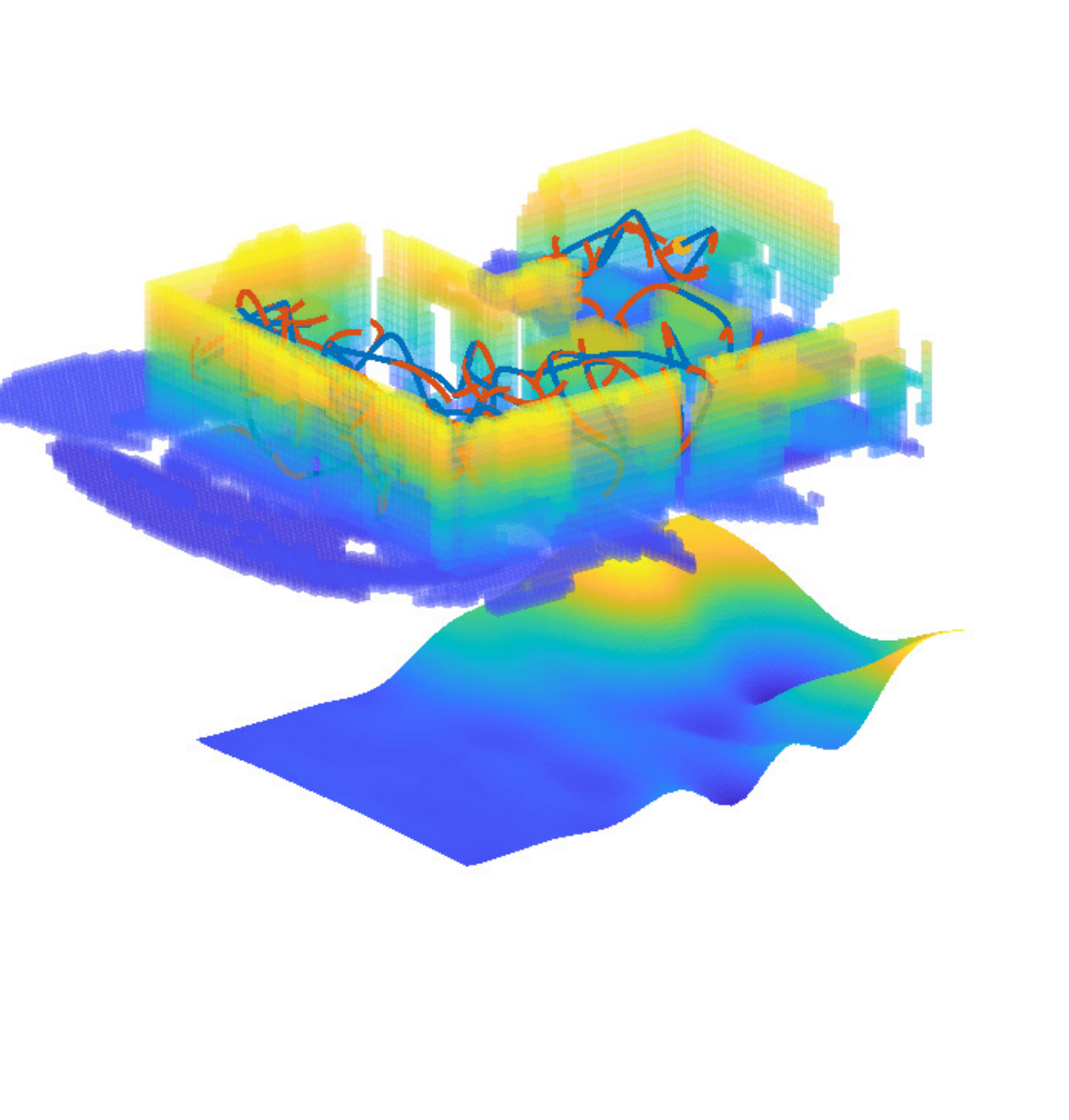}
		\vspace{-1.4cm}
		\caption{Exploration state at $300$ seconds.}\label{fig:EXP-SIM-TEST-C}
	\end{subfigure}
	\vspace{-0.4cm}
	\caption{Results of the simulation tests. The exploration algorithm runs over a map of $20\times10\times3$
	meters and was able to complete the exploration after only $400$ seconds.}\label{fig:EXP-SIM-TEST}
\end{figure*}
%%%%%%%%%%%

\subsection{Total Utility $J(\mathcal{R}(N_i))$}\label{sec:TOTAL-UTILITY}
The total utility function is responsible to merge gains and costs of the tree nodes in only one utility value,
which is used to select the best node during exploration.
Along this work we borrow the idea of~\cite{schmid2020efficient}, that
proposes a total utility function based on the notion of efficiency:
\begin{equation*}
	J(\mathcal{R}(N_i)) = \frac{\sum_{N_l \in \mathcal{R}(N_i)} g_l}{\sum_{N_l \in \mathcal{R}(N_i)} c_l}.
\end{equation*}

\section{Implementation Details}\label{sec:IMPLEMENTATION-DETAILS}
%%%%%%
\begin{figure}[!t]
	\centering
	\includegraphics[scale=.7]{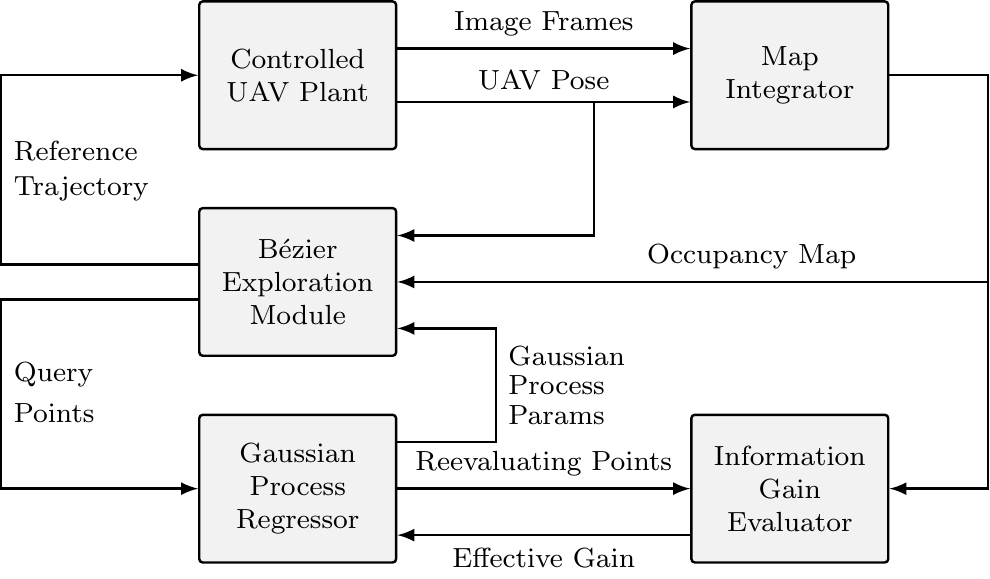}
	\caption{The overall scheme of the proposed exploration system.}\label{fig:ALG-SCHEME}
\end{figure}
%%%%%%
The overall structure of the proposed framework is shown in \figref{fig:ALG-SCHEME}.
The planning framework is built on top of a reliable UAV control scheme and an occupancy map
integrator whose construction is out of the scope of this work.
The solution relies on three different threads running in parallel.
\subsubsection{Bézier Exploration Module}
acts as the exploration supervisor and is in charge of computing the reference trajectories to be executed from the UAV platform.
The exploration module continuously grows and maintains the trajectories tree via random sampling
and by reevaluating each sampled non-executed node at each new iteration.
Since the tree is executed in a receding-horizon fashion, every time a new trajectory is commissioned, a bunch of previously
planned trajectories may become infeasible due to continuity issues. To handle this problem an activation flag is added as a node property.
Note that at each sampling, only the active branches are taken into consideration.
Furthermore, the explorer node also keeps care about possible \textit{deadline violations}.
In these cases the executed safe trajectory is added to the tree and rewired to all nearby active nodes.
This prevent us to lose previous sampled possible promising trajectories.
\subsubsection{Gaussian Process Regressor}
receives all sampled points from the exploration module and implements a policy to allow the cache of only the most informative ones.
In particular, a new point is retained only if it belongs to a new and not explored area.
The implementation of an R-tree allows for fast point insertion and retrieval, moreover it eases the insertion condition check.
In order to be consistent with the exploration task, and the evolution of the known map, all cached points are reevaluated periodically via explicit gain computation.
Such a module is in charge to train the Gaussian process parameters used by the explorer.
\subsubsection{Information Gain Evaluator}
receives the evaluating point and computes explicitly the information gain via sparse ray-casting as in~\cite{selin2019efficient}.

The architectural subdivision of the implemented algorithm in three different threads allows fast computing high informative trajectories
without the explicit gain computation bottleneck.
The whole algorithm has been implemented as a ROS network and paired with the PX4 autopilot both for the software-in-the-loop
simulations and the real-world tests. As a map representation we use OctoMap~\cite{hornung2013octomap}.
%%%%%%%%%
{
\renewcommand{\arraystretch}{1.35}
\begin{table}[b!]
    \centering
    \begin{tabular}{|c|c|c|c|}
        \hline
        Max Vel. & 1.5 m/s & Max Acc. & 1.5 m/s$^2$ \\
        \hline
        Sampled Nodes & 40 & Max Length & 3 m \\
        \hline
        Min Range & 0.3 m & Max Range & 5.0 m \\
        \hline
        Camera FoV & $115\times60$ deg & Map Res. & 0.2 m \\
        \hline
        $\mu_1$ $\mu_2$ $\mu_3$ & 0.5 0.1 0.1 & Time Res. & 0.5 s \\
        \hline
        Min Time & $1$ s& Max Time & $5$ s \\
        \hline
    \end{tabular}
    \caption{Parameters used in simulations.}\label{tab:SIMULATION-PARAMETERS}
\end{table}
\begin{table}[b!]
    \centering
    \begin{tabular}{|c|c|c|c|}
        \hline
        Max Vel. & 0.5 m/s & Max Acc. & 0.5 m/s$^2$ \\
        \hline
        Sampled Nodes & 20 & Max Length & 3 m\\
        \hline
        Min Range & 0.3 m & Max Range & 3.0 m \\
        \hline
        Camera FoV & $87\times58$ deg & Map Res. & 0.2 m \\
        \hline
        $\mu_1$ $\mu_2$ $\mu_3$ & 0.5 0.1 0.1 & Time Res. & 0.5 s \\
        \hline
        Min Time & $1$ s & Max Time & $5$ s \\
        \hline
    \end{tabular}
    \caption{Parameters used in the real-world experiments.}\label{tab:REAL-PARAMETERS}
\end{table}
}
%%%%%%%%%%%
\section{Experimental Evaluation}\label{sec:EXPERIMENTAL-EVALUATION}
The proposed approach has been evaluated via Gazebo-based simulations, exploiting the environment RotorS~\cite{furrer2016rotors} along with
the provided 3DR Iris quadrotor model, endowed of a depth sensor.
The algorithm performance have been also qualitatively evaluated in real-world scenario tests. In all tests the agent starts
in the origin with zero yaw angle. The agent performs an initial action of rotating $360$ degrees around the initial hovering
point in order to be sure to start the exploration with some initial information at hand.
\subsection{Simulation Tests}
%%%%%%%%%%%
\begin{figure*}[t!]
	\centering
	%%%%%%%%%% first column %%%%%%%%%%
	\begin{subfigure}[t]{0.32\textwidth}	
		\centering
		\includegraphics[trim={5cm 1.5cm 2cm 2.2cm}, clip = true, width = 1\textwidth]{./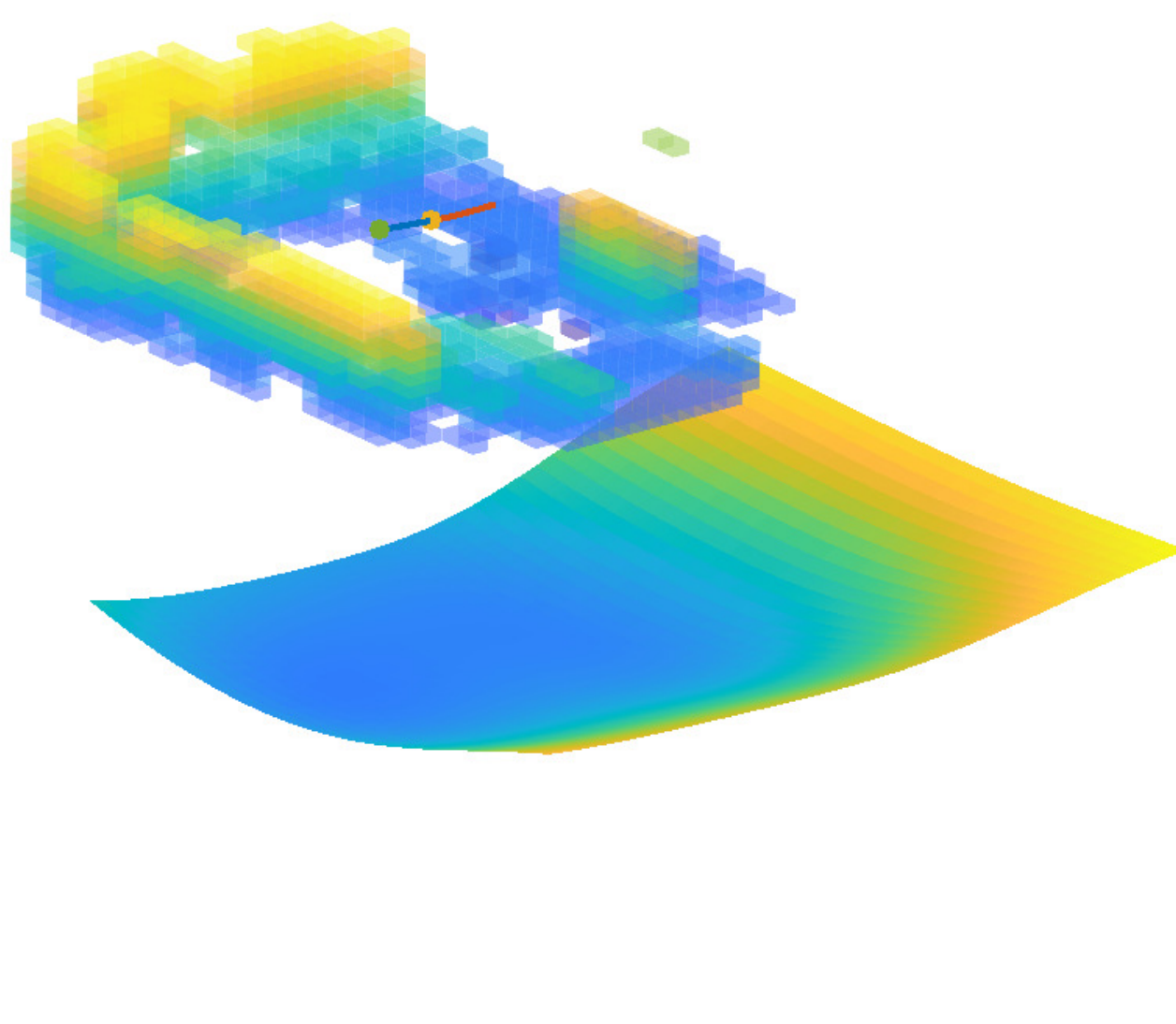}
		\vspace{-1.6cm}
		\caption{Exploration state at $20$ seconds.}\label{fig:EXP-REAL-TEST-A}
	\end{subfigure}
	%%%%%%%%%% second column %%%%%%%%%%
	\begin{subfigure}[t]{0.32\textwidth}
		\centering
		\includegraphics[trim={5cm 1.5cm 3cm 2.2cm}, clip = true, width = 0.9\textwidth]{./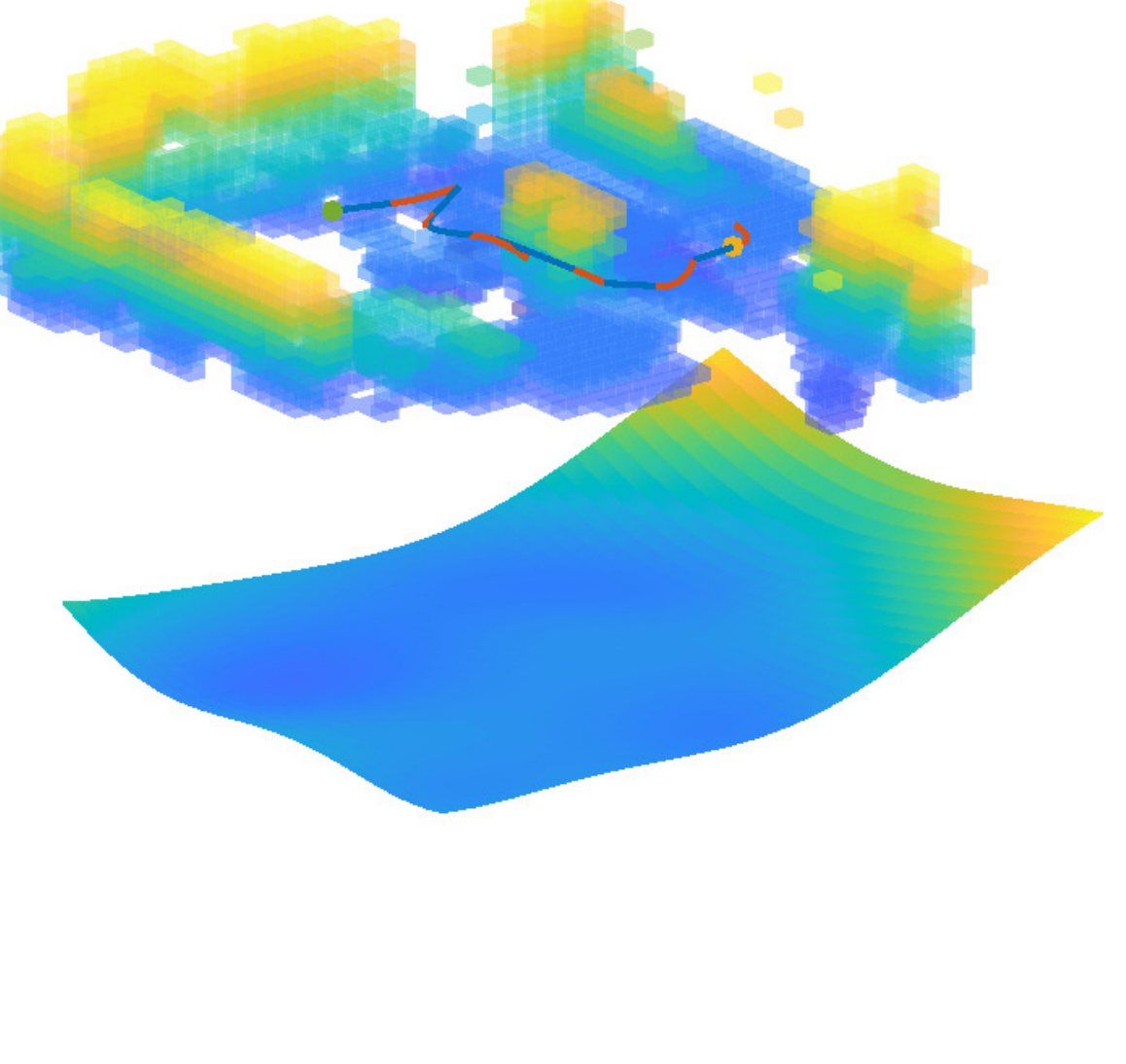}
		\vspace{-1.2cm}
		\caption{Exploration state at $57$ seconds.}\label{fig:EXP-REAL-TEST-B}
	\end{subfigure}
	%%%%%%%%%% Third column %%%%%%%%%%
	\begin{subfigure}[t]{0.32\textwidth}
		\centering
		\includegraphics[trim={5cm 1.5cm 3cm 2.2cm}, clip = true, width = 1\textwidth]{./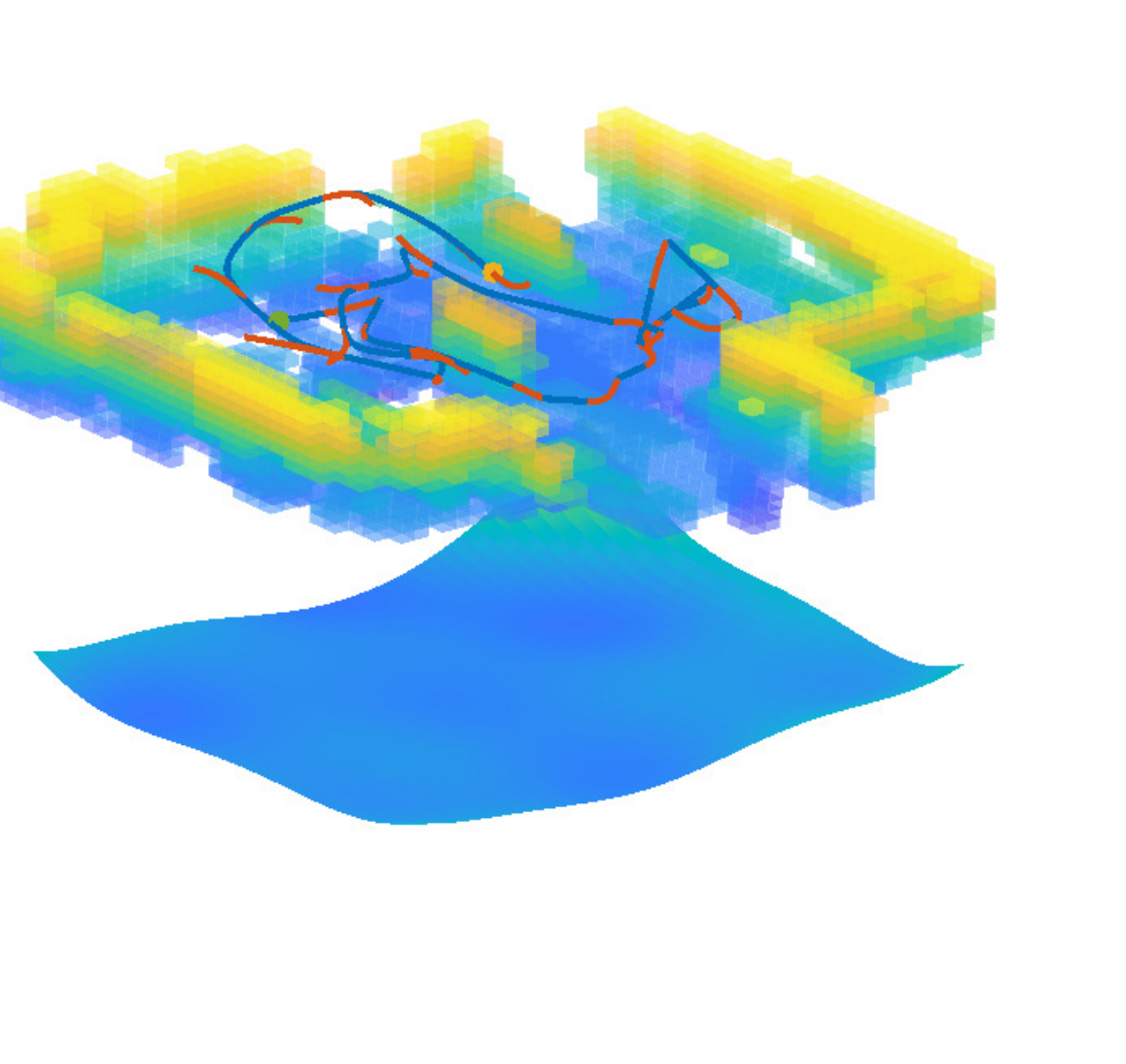}
		\vspace{-1.6cm}
		\caption{Exploration state at $167$ seconds.}\label{fig:EXP-REAL-TEST-C}
	\end{subfigure}
	\vspace{-0.6cm}
	\caption{Results of the real-world exploration test.
			 The exploration algorithm was run using only integrated onboard sensors and computational capabilities.}\label{fig:EXP-REAL-TEST}
\end{figure*} 
%%%%%%%%%%%
\begin{figure}[!t]
	\centering
	\includegraphics[scale=.25]{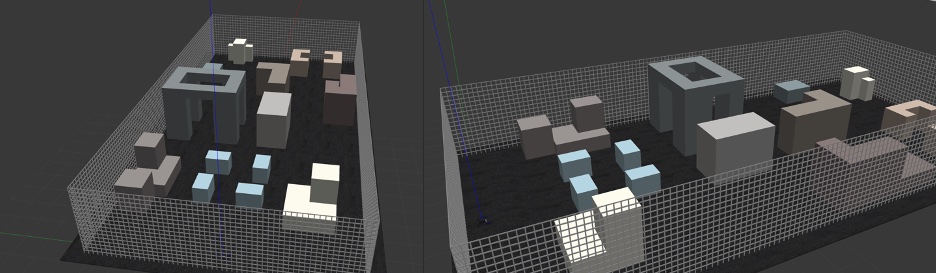}
	\caption{Simulation environment.}\label{fig:SIMULATING-SCENARIO}
\end{figure}
\begin{figure}[!t]
	\centering
	\includegraphics[scale=.15]{./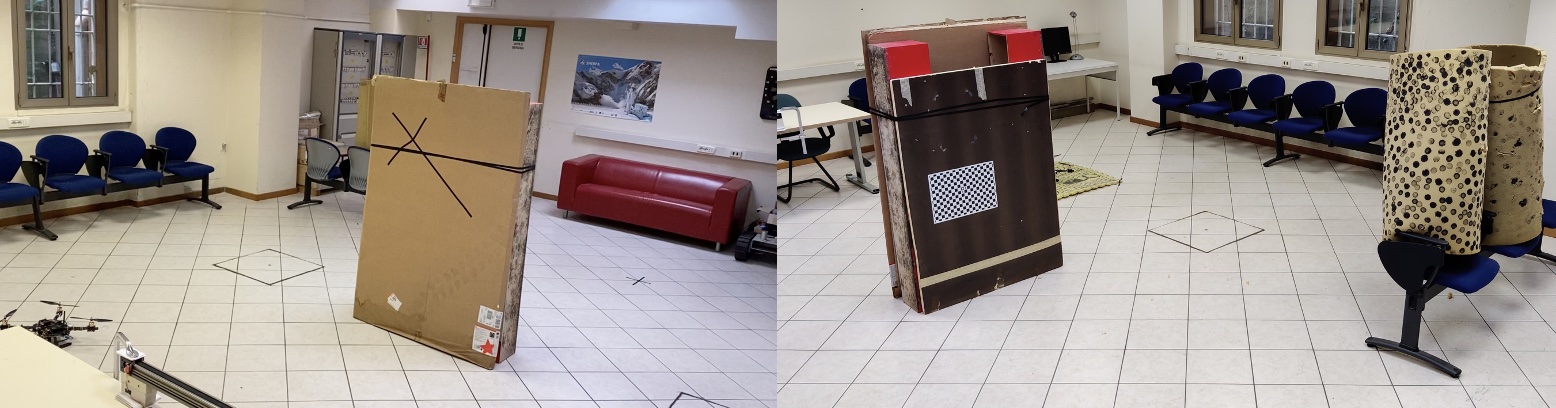}
	\caption{Real-world environment.}\label{fig:REAL-SCENARIO}
\end{figure}
%%%%%%%%%%%
The parameters used for the simulations are reported in Table~\ref{tab:SIMULATION-PARAMETERS}.~\figref{fig:EXP-SIM-TEST}
shows the obtained simulation results when agent is required to map an $20\times10\times3$
urban canyon (see~\figref{fig:SIMULATING-SCENARIO}). In particular, in~\figref{fig:EXP-SIM-TEST}, the blue
lines represent the reference trajectory, the red ones are the planned safe motions (both executed and non-executed),
while the bottom surface represents the current Gaussian process state. It can be noticed that,
the Gaussian process is constantly kept updated with the current map information and it results to be consistent,
at each time instant, with the exploration task.
In order to evaluate the performances against the state-of-the-art solutions, the proposed algorithm
has been compared with the \textit{Autonomous Exploration Planner} (AEP) described in~\cite{selin2019efficient}.~\figref{fig:COMPARED-ALGOROTHMS} compares the amount of explored volume over time by both the approaches.
The blue line represents the average of explored area obtained deploying our approach over $10$ experiments,
with the associated standard deviation represented in shaded blue. Conversely, the line and shade red reports the
results obtained via AEP, with the global exploration module disabled, on the same number of experiments.
It can be noticed that both algorithms achieve comparable results at the beginning of the exploration,
where most of the volume needs to be explored, then our solution tends to get higher exploration rate,
thanks to the ability to fast plan the next trajectory. Moreover, it is worth noting that our solution provides more consistency between different tests, as the variance is narrower with respect to the AEP, thus guaranteeing better repeatability of the experiment and a mitigation of the worst case scenarios.
~\figref{fig:TRAVELLED-DISTANCE} depicts
the overall travelled distance on same experiments.
Since our solution plans trajectories by never stopping the UAV motion, this leads to an overall
travelled distance $2$ times higher than the AEP solution.
%%%%%%%%%%%
\begin{figure}[!t]
	\centering
	\includegraphics[trim={0cm 0cm 0cm 1cm}, clip = true, scale=.4]{./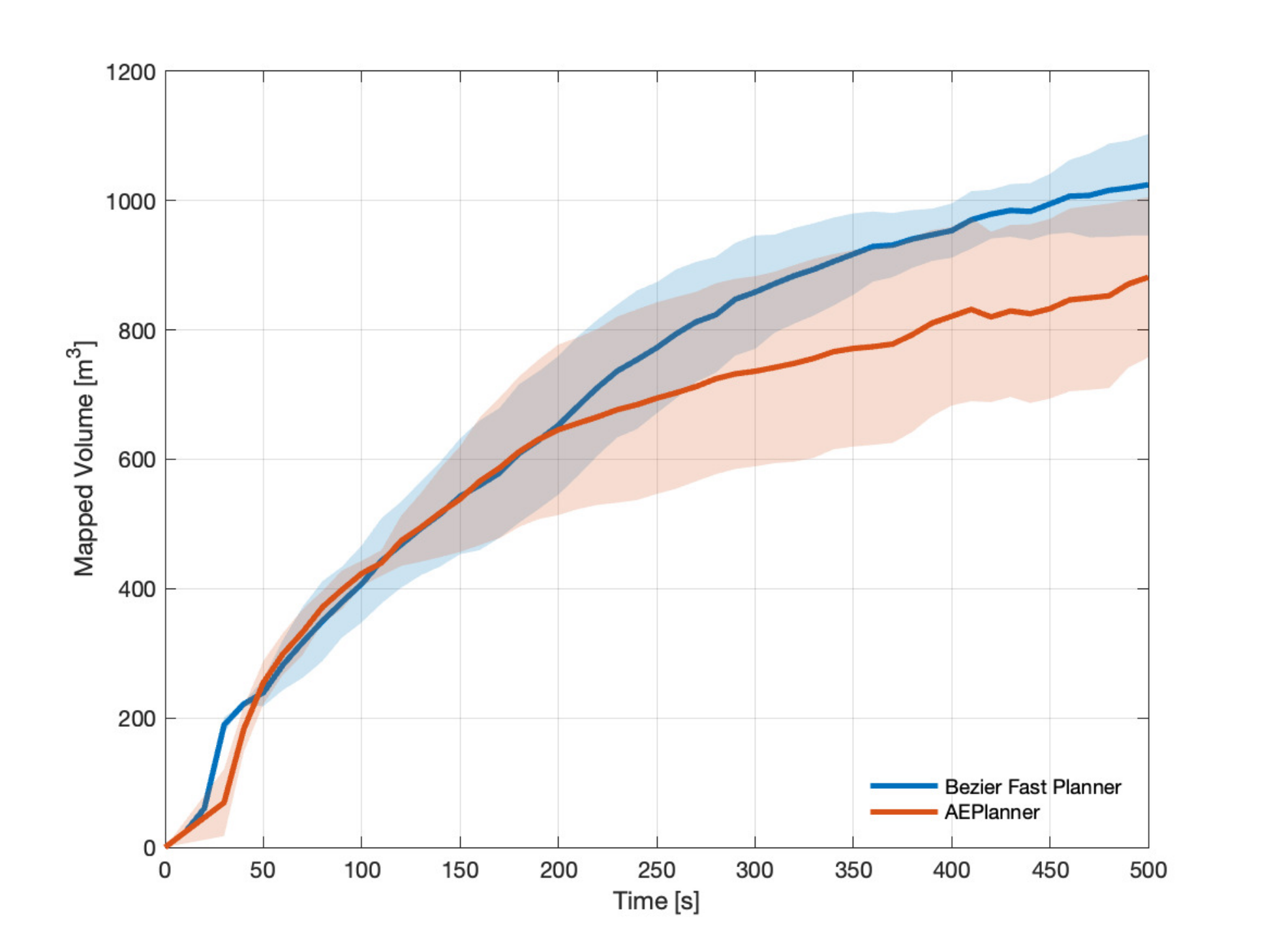}
	\vspace{-0.2cm}
	\caption{Exploration progress for the urban $20 \times 10 \times 3$ canyon. Mean and standard deviation over $10$ experiments are shown. Notice that due to the employing of pierced nets as maps borders makes the overall explored volume higher than the real volume.}\label{fig:COMPARED-ALGOROTHMS}
\end{figure}
\begin{figure}[!t]
    \vspace{-0.5cm}
	\centering
	\includegraphics[scale=.4]{./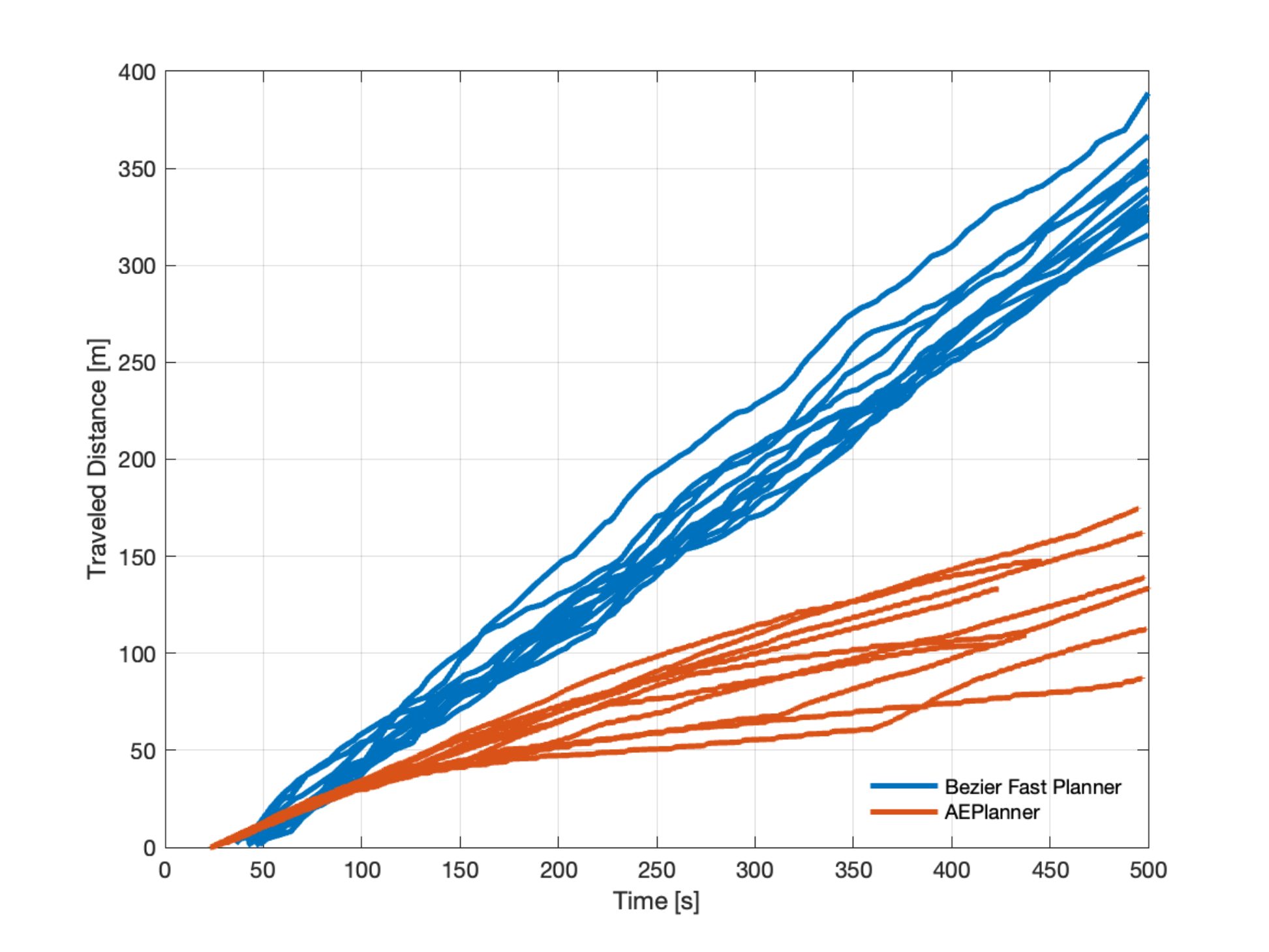}
	\vspace{-0.2cm}
	\caption{Overall traveled distance in the urban canyon. The traveled distances over $10$ experiments are shown.}\label{fig:TRAVELLED-DISTANCE}
\end{figure}
%%%%%%%%%%%
\subsection{Real-World Tests}\label{sec:REAL-TESTS}
The solution has been tested using a real UAV inside an indoor scenario using our office spaces.~\figref{fig:REAL-SCENARIO}
shows the scenario with a couple of high obstacles placed at the centre of the area,
while the obtained results are depicted in~\figref{fig:EXP-REAL-TEST}. The used parameters are reported in
Table~\ref{tab:REAL-PARAMETERS}. The available area was $9\times6\times2.5$ meters and it was successfully
mapped in $170$ seconds, the maximum camera range was saturated at 3 meters in order to stress navigation
trajectories around obstacles. The used UAV was powered by the PX4 autopilot and endowed with a
depth Intel RealSense D455 camera, mounted frontally. Visual odometry, used to localize the UAV in the indoor
scenario, was provided by an Intel Realsense T265 tracking camera. Video of the experiment can be found at: \href{https://youtu.be/4respaTDGsg}{youtu.be/4respaTDGsgy} 
\section{Conclusions}\label{sec:CONCLUSIONS}
In this work, we presented a novel approach to the problem of rapid exploration in three-dimensional unknown environments using UAVs.
The proposed solution speeds-up local exploration by sequentially building a tree of high informative trajectories.
The Bèzier curve parameterisation guarantees fast checking for collisions, while the Gaussian process inference permits
fast gain retrieval. The separate threads implementation was the key to avoid the gain computation bottleneck and
to allow the algorithm to run without stopping the exploration agent. Simulations show that the combination of path
planning and time allocation succeeds in sensibly incrementing the exploration rate with respect to state-of-the-art solutions.
As future work we propose to exploit more the Gaussian process properties by incorporating, in the exploration procedure,
also the gradient information. Furthermore, we will deeply investigate
new solutions in the context of global exploration, that can be efficiently coupled with this local exploration solution.
% Generated by IEEEtran.bst, version: 1.14 (2015/08/26)

\end{document}